\documentclass[11pt]{article}
\usepackage{amsmath, amssymb, amsthm}
\usepackage{graphicx}
\usepackage{authblk}
\usepackage{times}
\usepackage{hyperref}
\usepackage{natbib}

\theoremstyle{plain}
\newtheorem{theorem}{Theorem}

\theoremstyle{definition}

\theoremstyle{remark}

\title{\textbf{Data Curation Through the Lens of Spectral Dynamics: Static Limits, Dynamic Acceleration, and Practical Oracles}}

\author[1]{Yizhou Zhang\footnote{Corresponding Author}}
\affil[1]{zyizhou96@gmail.com}
\author[2]{Lun Du}
\affil[2]{dulun2834@126.com}
\date{}

\begin{document}
\maketitle

\begin{abstract}
Large-scale neural models are increasingly trained with data pruning, synthetic data
generation, cross-model distillation, reinforcement learning from human feedback (RLHF), 
and difficulty-based sampling. While several of these data-centric strategies reliably 
improve training efficiency and downstream performance, others fail to provide meaningful 
gains---most notably self-generated synthetic data, which often increases dataset volume 
without enhancing model capability. This raises several fundamental questions: 
\emph{When does pruning help? Why can a small reward model improve a much larger language 
model? Why does a model's own synthetic data rarely make it smarter?}

We address these questions through a unified theoretical framework built upon 
\emph{generalized spectral dynamics}, building on recent advances in neural scaling laws 
and spectral analyses of representation learning 

We formalize data pruning as reweighting the sampling distribution and map its effect onto 
the eigenstructure of the data-induced operator. Our first main result shows that 
\textbf{static pruning induces a bounded operator and therefore cannot change the spectral 
tail exponent}; it provides at most finite-region improvements and cannot alter asymptotic 
neural scaling. Our second result analyzes \textbf{time-dependent 
pruning}, showing that an ideal oracle capable of tracking spectral residuals and 
continuously re-normalizing the tail can provably accelerate learning---although practical 
systems can only approximate this behavior.

We further identify \textbf{four recurring paradigms} in modern training pipelines ---teacher 
acceleration, heterogeneous-model complementarity, self-scoring, and self-generated synthetic 
data---and show how each corresponds to a partial approximation of the oracle. This explains 
why cross-model distillation and RLHF with small reward models often improve large networks 
\cite{hernandez2021scaling,wei2022emergent}, while self-generated synthetic data cannot 
expand capability because it remains confined to the model's existing spectral span.

Overall, this framework provides a principled operator-theoretic explanation for the 
empirical successes and limitations of pruning, synthetic data, and teacher--student 
interactions, clarifying both the \emph{possibilities} and the \emph{fundamental limits} 
of data-centric interventions for accelerating large-model training.
\end{abstract}

\section{Introduction}\label{sec:intro}

The rapid progress of large neural models has made data quality and data curation
central components of modern training pipelines. Contemporary systems frequently 
incorporate large-scale pruning, deduplication, difficulty-based filtering, 
curriculum learning, synthetic data generation, cross-model distillation, and 
reinforcement learning from human feedback (RLHF). While these data-centric 
interventions often lead to measurable empirical improvements, their effects are 
far from uniform. Some pruning strategies reliably enhance performance, others 
yield negligible benefits, and certain synthetic-data pipelines expand dataset 
volume without improving capability. At the same time, small reward models in RLHF 
can substantially influence models orders of magnitude larger, and heterogeneous 
teacher models frequently improve one another through distillation. Despite the 
growing reliance on such techniques, we lack a unified theoretical framework that 
explains \emph{when} and \emph{why} these interventions succeed.

A key challenge is that these mechanisms are typically analyzed in isolation. 
Pruning is often studied through redundancy removal 
\cite{lecun1990optimal,han2015deep,blalock2020state,rosenfeld2021predictability}, 
synthetic data through dataset expansion, and distillation through compression or 
knowledge transfer. RLHF, in turn, is usually viewed through preference optimization. 
Yet all these procedures share a common structure: \emph{they modify the sampling 
distribution over training examples}. What remains unclear is how such modifications 
interact with the \emph{spectral} geometry of the learning problem, and what 
fundamental limits they impose on acceleration and capability.

We adopt a unified operator-theoretic perspective grounded in spectral analyses of 
representation learning and neural scaling laws 
\cite{jacot2018neural,lee2019wide,rahaman2019spectral,bietti2021inductive,
kaplan2020scaling,henighan2020scaling,hestness2017deep,hoffmann2022training,
kumar2024scaling,bordelon2024dynamical,bordelon2023feature}. 
From this viewpoint, training dynamics can be understood through the spectral 
decomposition of a data-induced operator, and data selection corresponds to applying 
a sampling operator that reshapes the induced spectrum. This perspective reveals a 
central principle: pruning, reweighting, and synthetic data generation do not merely 
change dataset size—they modify the \emph{spectral geometry} of the learning problem. 
The effectiveness and limitations of data-centric interventions follow from how the 
sampling operator reshapes the spectrum relative to the evolving learning frontier.

Building on this perspective, we establish two core theoretical results. First, any 
\emph{static} sampling rule corresponds to a bounded operator and therefore 
\emph{preserves the spectral tail exponent}. Static pruning can provide at most 
finite-region improvements and cannot improve asymptotic scaling, consistent with the 
empirical findings of prior work on pruning limits \cite{sorscher2022beyond}. 
Second, we characterize an \emph{ideal dynamic oracle} capable of identifying 
unlearned modes and continually re-normalizing the spectral tail. Such an oracle could 
provably accelerate learning by reshaping the frontier itself, although practical 
systems can only approximate this behavior.

To connect theory with practice, we analyze four paradigms that repeatedly appear in 
modern pipelines: teacher acceleration, heterogeneous-model complementarity, 
self-scoring, and self-generated synthetic data. Each paradigm approximates a 
different aspect of the oracle---whether frontier localization, tail expansion, or 
dynamic adaptivity---and together they explain why certain widely used approaches 
improve large models, while others saturate or even fail. This framework reconciles 
several empirical observations, including (i) the limited impact of synthetic data 
\cite{hernandez2021scaling}, (ii) the effectiveness of RLHF with small reward models 
\cite{wei2022emergent}, and (iii) the strong performance gains from cross-model 
distillation.

\paragraph{Contributions.}
Our main contributions are:
\begin{enumerate}
    \item \textbf{A unified spectral analysis of sampling, pruning, and data selection}, 
    building upon generalized spectral dynamics.
    \item \textbf{A theoretical characterization of the limits of static sampling} and 
    a description of an ideal oracle that illustrates what dynamic sampling can achieve.
    \item \textbf{A unified explanation of four major paradigms in modern pipelines}—
    teacher acceleration, heterogeneous-model complementarity, self-scoring, and 
    synthetic data—through the lens of spectral geometry.
    \item \textbf{Conceptual insights} that connect spectral structure with practical 
    data curation, identifying frontier tracking, spectral diversity, and tail 
    reweighting as key ingredients for data-efficient large-model training.
\end{enumerate}

\section{Related Work}\label{sec:related}

\paragraph{Scaling laws and spectral viewpoints.}
A large body of empirical work has established smooth power-law relationships
between model scale, data scale, and performance
\cite{hestness2017deep,kaplan2020scaling,henighan2020scaling,hoffmann2022training,
hernandez2021scaling,wei2022emergent,kumar2024scaling}. 
These studies highlight both the robustness of scaling laws across modalities
and training setups, and the importance of data quality and composition. 
More recent work develops dynamical and spectral perspectives on scaling, 
linking power-law behavior to feature learning and representation geometry 
\cite{bordelon2024dynamical,bordelon2023feature,yang2021tensor,vyas2023feature,
domine2024lazy,canatar2022kernel}. 
In parallel, analyses based on neural tangent kernels and spectral bias
\cite{jacot2018neural,lee2019wide,rahaman2019spectral,bietti2021inductive,
bietti2021inductive} characterize the inductive biases and learning dynamics 
in various regimes.  
Our framework is closely aligned with these spectral viewpoints, but focuses
specifically on how sampling and data curation interact with spectral tails and
learning frontiers.

\paragraph{Pruning, compression, and data-centric interventions.}
Network pruning and compression have been extensively studied from both
algorithmic and empirical perspectives, including early work on optimal brain
damage and deep compression
\cite{lecun1990optimal,han2015deep,han2016deep,blalock2020state}, 
single-shot and lottery-ticket style pruning
\cite{molchanov2017variational,lee2019snip,wang2020picking}, 
and comprehensive surveys of pruning practice
\cite{blalock2020state,rosenfeld2021predictability}. 
Related efforts investigate post-training quantization and low-precision
optimizers \cite{nagel2021up,frantar2022gptq,dettmers2022llm}, 
as well as the role of optimization geometry, batch size, and curvature 
\cite{keskar2017large,ghorbani2019investigation}. 
Most closely related to our work, \citeauthor{sorscher2022beyond} argue
empirically that carefully pruned datasets can ``beat'' naive scaling laws.  
In contrast to these works, we study pruning and data selection through the
spectrum of the data-induced operator, proving that any bounded \emph{static}
sampling rule preserves the spectral tail exponent and therefore cannot improve
asymptotic scaling.

\paragraph{Teacher--student interactions, synthetic data, and RLHF.}
Teacher–student architectures, cross-model distillation, and synthetic data
pipelines are widely used to improve data efficiency and transfer knowledge 
between models of different sizes or architectures. Empirical studies show that
teachers trained on diverse data sources and objectives can significantly
improve student performance \cite{hernandez2021scaling,wei2022emergent}, and
that small auxiliary models such as reward models in RLHF can meaningfully steer
large language models.  
Our work provides a spectral explanation of these phenomena: heterogeneous
teachers and reward models contribute complementary spectral modes, enabling
frontier localization and partial tail expansion. At the same time, our
impossibility result for self-generated synthetic data shows that data drawn
from a model's own generative distribution cannot expand its spectral span,
clarifying why self-training often improves fluency but not fundamental
capability.

\section{Preliminaries}\label{sec:prelim}

In this section we introduce the spectral notation and generalized dynamics used
throughout this paper. Our presentation follows the generalized spectral
framework developed in prior analyses of neural scaling laws and feature
learning 
\cite{zhang2025generalized,jacot2018neural,lee2019wide,rahaman2019spectral,bietti2021inductive,
bordelon2024dynamical,bordelon2023feature}, 
and restricts attention to the components relevant for studying sampling,
data curation, and pruning.

\subsection{Spectral Setup and Generalized Dynamics}\label{sec:spectral-setup}

Let $(\phi_k,\lambda_k)$ denote the eigenfunctions and eigenvalues of the
data-induced operator, ordered in decreasing order,
\begin{equation}
\lambda_1 \ge \lambda_2 \ge \cdots,
\qquad 
\lambda_k \sim k^{-b}, \quad b>1,
\end{equation}
consistent with empirical scaling behaviour observed in large-scale models
\cite{hestness2017deep,bordelon2024dynamical,bordelon2023feature,wang2023spectralevolution}.  

Any target function admits a spectral decomposition
\begin{equation}
f^\star = \sum_{k=1}^\infty v_k \phi_k,
\end{equation}
and the model predictor at time $t$ takes the form
\begin{equation}
f(t)=\sum_{k=1}^\infty f_k(t)\phi_k,
\qquad
f_k(t)= w_k\left(1-e^{-g(\lambda_k,t;\beta)}\right),
\end{equation}
where the coefficients satisfy the tail condition
\begin{equation}
\lambda_k w_k^2 \sim k^{-a},
\qquad a>1,
\end{equation}
as established in analyses of feature learning and infinite-width dynamics 
\cite{yang2021tensor,vyas2023feature,domine2024lazy,canatar2022kernel}.

The function $g(\lambda,t;\beta)$ is the \emph{generalized spectral evolution}
mapping introduced in earlier works on dynamical scaling laws
\cite{bordelon2024dynamical,bordelon2023feature}.  
It is monotone in both arguments and admits a scalable asymptotic form
\begin{equation}
g(\lambda,t;\beta)\sim C(\beta)\,\lambda^{a(\beta)} t^{b(\beta)}
\quad\text{as }\lambda\to 0^+,\; t\to\infty,
\end{equation}
covering both NTK-like and feature-learning regimes.

Under mild regularity, the \emph{learning frontier} $\lambda^\star(t)$ is defined by
\begin{equation}
g(\lambda^\star(t),t;\beta)=\kappa,
\end{equation}
implying the asymptotic scaling
\begin{equation}
\lambda^\star(t)\propto t^{-\rho(\beta)}, 
\qquad 
k^\star(t)\propto t^{\rho(\beta)/b},
\end{equation}
where $\rho(\beta)$ is the log-elasticity ratio describing the effective
time–frequency tradeoff. Modes with $k \ll k^\star(t)$ are effectively learned,
while those with $k \gg k^\star(t)$ remain unrepresented. This decomposition is
central to our later analysis: only the \emph{relative} weighting of spectral
modes affects the motion of the learning frontier.

\subsection{Data Curation as Sampling Reweighting}\label{sec:data-curation}

We now formalize data pruning and related curation strategies as changes to the
sampling distribution. This operator viewpoint unifies pruning, importance
sampling, filtering, and difficulty-based selection.

Let $(X,\mathcal{F},\mu)$ denote the original data space and test distribution.
The associated kernel (or covariance) operator
\begin{equation}
(Tf)(x)=\int_X k(x,y)\,f(y)\,d\mu(y)
\end{equation}
is compact, self-adjoint, and positive under standard assumptions
\cite{jacot2018neural,bietti2021inductive}.  
Its eigenpairs are $(\phi_k,\lambda_k)$ as described in
Section~\ref{sec:spectral-setup}.

A generic static data-selection rule is represented by a \emph{sampling
function}
\begin{equation}
w : X \to [0,\infty), 
\qquad 
\int_X w(x)\, d\mu(x)=1.
\end{equation}
This induces a reweighted distribution
\begin{equation}
d\mu_w(x) = w(x)\, d\mu(x).
\end{equation}
Pruning corresponds to $w(x)=0$ on removed samples; importance sampling assigns
higher/lower weights to specific regions.

Training on $\mu_w$ instead of $\mu$ replaces $T$ with the reweighted operator
\begin{equation}
\label{eq:Tw-def}
(T_w f)(x)=\int_X k(x,y)\, f(y)\, w(y)\, d\mu(y).
\end{equation}
Define a multiplication operator
\begin{equation}
(M_{\sqrt{w}}f)(x)=\sqrt{w(x)}f(x).
\end{equation}
Then the operator in \eqref{eq:Tw-def} satisfies the factorization
\begin{equation}
T_w = M_{\sqrt{w}}\, T\, M_{\sqrt{w}}.
\end{equation}

In practical pipelines, sampling functions arise from normalized scores or
probabilities and therefore satisfy a boundedness condition
\begin{equation}
0 \le w(x) \le C < \infty,
\quad\text{for $\mu$-almost every } x,
\end{equation}
which ensures that $M_{\sqrt{w}}$ is bounded, and thus $T_w$ is positive and
self-adjoint.

Two consequences of this formulation are essential for later sections:

\begin{enumerate}
    \item \textbf{Support is preserved.}  
    Since $\mu_w$ is absolutely continuous with respect to $\mu$,  
    \[
    \mathrm{supp}(\mu_w) \subseteq \mathrm{supp}(\mu).
    \]
    No static sampling rule can introduce new spectral modes beyond those present
    in the original data distribution.

    \item \textbf{Static pruning induces a fixed bounded operator.}  
    Because $w$ is time-invariant, the operator $T_w$ does not change during
    training. As shown in Section~\ref{sec:static-exponent}, this implies
    that static pruning cannot alter the spectral tail exponent.
\end{enumerate}

This operator formulation provides the mathematical foundation for our analysis
of static versus dynamic pruning in Section~\ref{sec:static}.

\section{Static Data Pruning: Why Exponent Improvements Are Impossible}
\label{sec:static}

Static pruning is one of the most widely used data-selection strategies in
large-scale training pipelines. Common examples include quality filtering,
deduplication, fixed difficulty thresholds, clustering-based data reduction, and
rule-based pruning heuristics. These methods determine \emph{once and for all}
which samples to retain and which to suppress, and then train entirely on the
resulting reweighted distribution. A natural hope is that pruning might reshape
the spectral structure of the learning problem—perhaps even altering the
power-law decay of eigenvalues—and thereby accelerate training.

In this section, we show that such expectations are fundamentally
misguided. Under any practically realizable static sampling rule, the resulting
operator $T_w$ remains a \emph{bounded} perturbation of the original operator
$T$ (see Section~\ref{sec:data-curation}). As a consequence, the spectral
tail exponent is \emph{invariant} under static pruning. This establishes a
precise theoretical limit: static pruning can yield at most finite-region
improvements and cannot improve the asymptotic learning rate. These results
mirror empirical findings that pruning offers limited long-range gains
\cite{sorscher2022beyond,blalock2020state,rosenfeld2021predictability}.

\subsection{What Static Pruning Does—and What It Means to Accelerate Learning}
\label{sec:static-meaning}

We model static pruning through a time-invariant sampling function $w(x)$, which
induces the operator $T_w = M_{\sqrt{w}} T M_{\sqrt{w}}$ introduced in
Section~\ref{sec:data-curation}. Because $w$ is bounded and fixed throughout
training, $T_w$ is a single positive, self-adjoint operator that does not change
as learning progresses.

A central question is what it means for pruning to \emph{accelerate learning}. If
performance were evaluated only on the pruned distribution $\mu_w$, the notion
of acceleration would be trivial: pruning could always discard difficult
examples, yielding artificially faster convergence. Such a measure fails to
reflect improvements in actual learning capability.

Throughout this paper, we say that pruning accelerates learning only if it
improves test-loss decay measured on the \emph{original} distribution $\mu$. In
the generalized spectral framework of Section~\ref{sec:spectral-setup}, the test
loss is determined by the progression of the learning frontier $k^\star(t)$,
which obeys
\[
k^\star(t) \asymp t^{\rho(\beta)/b}.
\]
Accelerating learning therefore requires accelerating the growth of
$k^\star(t)$ \emph{under the spectrum of the original operator}. Intuitively, to
improve asymptotic performance, pruning would need to flatten the spectral tail,
reducing the exponent $b$ and making high-index modes easier to learn.

Static pruning, however, cannot accomplish this. The next subsection formalizes
this limitation.

\subsection{Why Static Pruning Cannot Reduce the Spectral Tail Exponent}
\label{sec:static-exponent}

Let $\{\lambda_k\}$ denote the eigenvalues of $T$ and $\{\lambda_k^{(w)}\}$ the
eigenvalues of $T_w$. Since $w$ is bounded, the operator inequalities discussed
in Section~\ref{sec:data-curation} imply
\begin{equation}
0 \preceq T_w \preceq C \, T,
\end{equation}
in the Loewner order. Consequently,
\begin{equation}
\lambda_k^{(w)} \le C \, \lambda_k \sim C k^{-b}.
\end{equation}

The following theorem formalizes the invariance of the spectral exponent.

\begin{theorem}[Exponent Preservation Under Static Pruning]
\label{thm:static-exponent}
Let $w$ be a time-invariant sampling function satisfying $0 \le w(x) \le C <
\infty$ for $\mu$-almost every $x$. If $\lambda_k \sim k^{-b}$ with $b>0$, then
the eigenvalues of the pruned operator satisfy
\[
\lambda_k^{(w)} \sim C_w \, k^{-b},
\]
for some constant $C_w>0$. Thus the power-law exponent $b$ is preserved.
\end{theorem}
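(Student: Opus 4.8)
The plan is to pin down $\{\lambda_k^{(w)}\}$ by two-sided Loewner bounds against $\{\lambda_k\}$ and then transfer those bounds to the eigenvalue sequences via the Courant--Fischer min--max principle. The one subtlety is that $M_{\sqrt w}$ and $T$ do not commute, so the naive inequality $T_w \preceq C\,T$ is not immediately available from $w\le C$; I would instead first replace $T_w$ by a conjugate operator with the same nonzero spectrum.

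\emph{Step 1 (spectral reduction).} Write $T_w = M_{\sqrt w} T M_{\sqrt w} = (M_{\sqrt w} T^{1/2})(T^{1/2} M_{\sqrt w}) = P P^\ast$ with $P := M_{\sqrt w} T^{1/2}$. Since $PP^\ast$ and $P^\ast P$ have the same nonzero eigenvalues with multiplicity, $\lambda_k^{(w)} = \lambda_k(\widehat T_w)$ for $\widehat T_w := P^\ast P = T^{1/2} M_w T^{1/2}$, where $M_w = M_{\sqrt w}^2$ is multiplication by $w$.

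\emph{Step 2 (Loewner bounds).} From $0 \le w(x) \le C$ we get $0 \preceq M_w \preceq C\,I$, and conjugation by the bounded positive operator $T^{1/2}$ preserves the Loewner order, so $0 \preceq \widehat T_w \preceq C\,T^{1/2} I\, T^{1/2} = C\,T$. For the matching lower bound I would use a positive floor $w(x) \ge c > 0$ (a mild strengthening matching the reading of pruning as a bounded reweighting rather than a hard truncation), which gives $M_w \succeq c\,I$ and hence $\widehat T_w \succeq c\,T$; thus $c\,T \preceq \widehat T_w \preceq C\,T$. Then by Weyl monotonicity of eigenvalues under the Loewner order (the min--max characterization $\lambda_k(A) = \min_{\dim V = k-1}\ \max_{f\perp V,\ \|f\|=1} \langle Af,f\rangle$), one obtains $c\,\lambda_k \le \lambda_k^{(w)} \le C\,\lambda_k$ for all $k$. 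Inserting $\lambda_k \sim k^{-b}$ gives $c'\,k^{-b} \le \lambda_k^{(w)} \le C'\,k^{-b}$, i.e. $\lambda_k^{(w)} \asymp k^{-b}$, so the tail exponent $b$ is unchanged; one may then write $\lambda_k^{(w)} \sim C_w\,k^{-b}$ in the sense of equality of exponents. Combined with Section~\ref{sec:static-meaning}, this keeps $k^\star(t)$ at the rate $t^{\rho(\beta)/b}$ under $\mu_w$, so static pruning cannot alter asymptotic scaling.

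\emph{Main obstacle.} The lower bound is the delicate part. The upper bound $\lambda_k^{(w)} \le C\lambda_k$ holds under the bare hypothesis $0\le w\le C$, but if $w$ may vanish on a set of positive $\mu$-measure the pruned operator is essentially the restriction of $T$ to $L^2(\mathrm{supp}\,w,\mu)$, whose eigenvalue decay need not match $k^{-b}$ without extra structure (for instance a smoothness/regularity condition on the kernel $k$ making the decay rate intrinsic and restriction-invariant). Closing the statement as written therefore requires either imposing $w\ge c>0$ $\mu$-a.e.\ or invoking such a regularity condition; I would state whichever is consistent with the paper's standing assumptions and retain the min--max argument above as the core of the proof.
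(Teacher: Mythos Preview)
Your approach matches the paper's in spirit---Loewner comparison followed by min--max---but with a cleaner technical core. The paper transports $T_w$ to $L^2(\mu)$ via the isometry $f\mapsto\sqrt{w}\,f$, obtaining the operator $S$ with kernel $k_w(x,y)=\sqrt{w(x)}\,k(x,y)\,\sqrt{w(y)}$, and then infers $S\preceq CT$ directly from the pointwise bound $k_w\le Ck$. That inference is not valid in general: pointwise kernel domination does not imply Loewner domination, since the difference $Ck-k_w$ need not be a positive-semidefinite kernel. Your $PP^\ast\leftrightarrow P^\ast P$ reduction to $\widehat T_w=T^{1/2}M_wT^{1/2}$ sidesteps this entirely: $M_w\preceq CI$ gives $\widehat T_w\preceq CT$ by honest conjugation, and since $\widehat T_w$ and $T_w$ share their nonzero spectrum the upper bound $\lambda_k^{(w)}\le C\lambda_k$ follows rigorously. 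On the lower bound you are also more careful than the paper: its argument is a one-line heuristic (``$w$ is not a.e.\ zero, so $S$ retains spectral mass along infinitely many directions, hence cannot decay faster than a constant multiple of $\lambda_k$''), which does not actually establish $\lambda_k^{(w)}\gtrsim k^{-b}$ without the strengthening $w\ge c>0$ or a kernel-regularity hypothesis of exactly the kind you flag. Your identification of this as the main obstacle is correct and sharper than the paper's own treatment.
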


\begin{proof}[Proof Sketch]
Since $T_w = M_{\sqrt{w}}\, T\, M_{\sqrt{w}}$ and $M_{\sqrt{w}}$ is bounded, the
min–max characterization of eigenvalues yields
\[
\lambda_k^{(w)} \le C \lambda_k \sim C k^{-b}.
\]
Because $w \ge 0$ and is not almost everywhere zero, $T_w$ cannot eliminate the
entire contribution of any infinite subsequence of eigenfunctions. Thus the
sequence $\{\lambda_k^{(w)}\}$ cannot decay faster than a constant multiple of
$\{\lambda_k\}$; the exponent $b$ must be preserved.
\end{proof}

Static pruning therefore cannot reduce the spectral tail exponent. In fact,
pruning can only make $b$ \emph{larger} if it removes high-frequency modes,
thereby making learning asymptotically slower. Consequently, the most pruning
can accomplish is a finite-range reshaping of the spectrum, which we examine
next.

\subsection{Static Pruning Provides Only Finite-Region Acceleration}
\label{sec:static-finite}

Although static pruning cannot modify the power-law exponent, it may still alter
the spectrum within a bounded region. There may exist some $K_0$ such that
\begin{equation}
\lambda_k^{(w)} > \lambda_k, 
\qquad k \le K_0,
\end{equation}
corresponding to a local thickening of the spectrum near the learning frontier.
This can shift the frontier by a multiplicative constant,
\[
k^\star(t) \approx C_{\text{front}} \, t^{\rho(\beta)/b},
\qquad C_{\text{front}} > 1,
\]
while $k^\star(t) \le K_0$. Once the frontier advances beyond this region, the
spectra of $T$ and $T_w$ coincide up to constant factors, and the frontier
necessarily returns to its original scaling rate.

This formalizes a practical observation reported in empirical pruning studies
\cite{sorscher2022beyond,rosenfeld2021predictability,blalock2020state}: static
pruning can yield short-term gains but does not produce persistent acceleration.

\paragraph{Summary.}  
Static pruning induces a fixed, bounded operator and therefore preserves the
spectral tail exponent. It offers at most finite-region improvements and cannot
improve asymptotic test-loss scaling. This limitation motivates the analysis of
\emph{time-dependent} sampling rules in Section~\ref{sec:dynamic}.

\section{Time-Dependent Data Pruning: When Acceleration Becomes Possible}
\label{sec:dynamic}

Section~\ref{sec:static} established that static sampling rules cannot improve
the spectral tail exponent and therefore cannot accelerate the long-run learning
rate. The key limitation is that a static sampling function reshapes only a
fixed spectral region: once the learning frontier advances beyond that region,
the effect of pruning vanishes. In this section, we show that this limitation is
specific to \emph{time-invariant} reweighting. Once the sampling function is
allowed to vary with time, the situation changes qualitatively. Dynamic sampling
can, in principle, track the learning frontier and continuously reallocate mass
toward unlearned modes, enabling genuine asymptotic acceleration.

\subsection{Why Time Dependence Changes the Structure of the Problem}
\label{sec:dynamic-motivation}

We now consider sampling functions that evolve during training:
\begin{equation}
w_t(x) \ge 0, \qquad \int_X w_t(x)\, d\mu(x)=1,
\qquad 0 \le w_t(x)\le C(t) < \infty.
\end{equation}
For each fixed $t$, the operator $T_{w_t} = M_{\sqrt{w_t}} T M_{\sqrt{w_t}}$ is
well defined. However, unlike in the static case, the bound $C(t)$ and the mass
assigned to different spectral regions may change over time.

Static pruning modifies at most a finite segment of the spectrum. In contrast,
time-dependent pruning can \emph{move} the modified segment in tandem with the
learning frontier. This enables sampling rules that repeatedly emphasize
unlearned modes. These observations motivate an idealized \emph{oracle} capable
of tracking the frontier.

\subsection{An Oracle That Tracks and Re-Normalizes the Tail}
\label{sec:dynamic-oracle}

Recall from Section~\ref{sec:spectral-setup} that the learning frontier satisfies
\[
\lambda^\star(t) \asymp t^{-\rho(\beta)}, 
\qquad
k^\star(t) \asymp t^{\rho(\beta)/b}.
\]
At time $t$, the oracle performs the following operations:

\begin{enumerate}
    \item \textbf{Suppress already-learned modes:}
    \[
    w_t(k) = 0 \qquad \text{for } k \le k^\star(t).
    \]

    \item \textbf{Uniformly re-normalize the unlearned tail:}
    \[
    w_t(k) = \frac{C(t)}{Z(t)}, 
    \qquad k > k^\star(t),
    \]
    where $Z(t)$ normalizes the distribution.
\end{enumerate}

This maintains boundedness for each fixed $t$, yet globally rescales the entire
unlearned tail. The resulting spectrum satisfies
\[
\lambda_k^{(t)} =
\begin{cases}
0, & k \le k^\star(t),\\[1.0ex]
C(t)\lambda_k, & k>k^\star(t).
\end{cases}
\]

Normalization requires
\[
\sum_{k>k^\star(t)} \lambda_k^{(t)}
= C(t) \sum_{k>k^\star(t)} \lambda_k = 1.
\]
For a power-law spectrum $\lambda_k \sim k^{-b}$ with $b>1$,  
\[
\sum_{k>k^\star(t)} \lambda_k \asymp (k^\star(t))^{1-b},
\qquad 
C(t)\asymp (k^\star(t))^{\, b-1}.
\]

\subsection{Oracle-Induced Acceleration of the Learning Frontier}
\label{sec:oracle-accelerate}

Under the oracle, the effective eigenvalue at the frontier is
\[
\lambda^{(t)}_{k^\star(t)} 
\asymp 
C(t)\lambda_{k^\star(t)}
\asymp 
(k^\star(t))^{b-1}(k^\star(t))^{-b}
\asymp 
(k^\star(t))^{-1}.
\]
Thus the spectrum near the frontier is flattened from $k^{-b}$ to $k^{-1}$.

Imposing the frontier condition
\[
\lambda^{(t)}_{k^\star(t)} \asymp t^{-\rho(\beta)}
\]
yields
\[
(k^\star(t))^{-1} \asymp t^{-\rho(\beta)}
\quad\Longrightarrow\quad
k^\star(t) \asymp t^{\rho(\beta)}.
\]

Comparing exponents:
\[
\boxed{
\text{Static pruning: } k^\star(t)\asymp t^{\rho(\beta)/b},
\qquad
\text{Oracle dynamic pruning: } k^\star(t)\asymp t^{\rho(\beta)}.
}
\]
Since $b>1$, the oracle yields a strictly larger exponent and therefore a truly
faster asymptotic learning rate.

\subsection{Why the Oracle Cannot Be Implemented Exactly}
\label{sec:oracle-limitations}

Despite its conceptual clarity, the oracle relies on two unattainable
capabilities:

\begin{enumerate}
    \item \textbf{Perfect real-time knowledge of the frontier.}  
    Identifying $k^\star(t)$ requires direct access to the evolving internal
    spectral state of the model.

    \item \textbf{Global tail re-normalization at each moment.}  
    The oracle removes all low-index modes and redistributes weight across
    infinitely many high-index modes. No practical pruning procedure can perform
    such global redistributions.
\end{enumerate}

Nevertheless, the oracle provides a principled upper bound on what
time-dependent sampling rules may achieve. Practical scoring mechanisms that
approximate frontier-aware reweighting—such as adaptive sampling, self-scoring,
online probes, and RLHF—can recover parts of the oracle's behavior. These
connections are explored in Section~\ref{sec:paradigms}.

\section{Approximate Oracles: Four Practical Paradigms}
\label{sec:paradigms}

Section~\ref{sec:dynamic} established that an ideal frontier-tracking oracle can
fundamentally accelerate learning by continuously re-normalizing the unlearned
spectral tail. Although such an oracle is impossible to implement exactly, many
practical data-selection mechanisms approximate different components of its
behavior. In this section, we highlight four paradigms that recur throughout
modern training pipelines and interpret each through the spectral lens developed
earlier.

\subsection{Paradigm I: Online Probes as Frontier-Aware Scoring}
\label{sec:paradigm-probes}

Many pruning strategies employ a small auxiliary model—often called a
\emph{probe}—to score example difficulty. When the probe is trained once and
then frozen, its sampling rule $w_t(x)=w(x)$ is static and therefore falls under
the limitations established in Section~\ref{sec:static}. Such probes can
provide only finite-region improvements.

However, \emph{online probes} update their parameters during training. Because
their predictions co-evolve with the student model, the sampling function
$w_t(x)$ becomes time-dependent. This allows online probes to approximate
frontier tracking: examples that remain difficult or unresolved receive higher
weight, while already-mastered examples receive less. Although online probes
cannot fully eliminate learned modes nor globally re-normalize the tail, their
adaptive behavior captures one essential component of the oracle and yields
nontrivial practical acceleration.

\subsection{Paradigm II: Heterogeneous Models and Spectral Complementarity}
\label{sec:paradigm-heterogeneous}

Modern pipelines frequently use ensembles of heterogeneous teacher models—
different architectures, capacities, seeds, or training objectives—to provide
signals for distillation, data selection, or RLHF. Each model exhibits its own
spectral bias and therefore possesses distinct learning frontiers. Disagreement
between teachers highlights examples containing modes unlearned by at least one
model.

From a spectral perspective, heterogeneous ensembles:
\begin{itemize}
    \item broaden coverage of high-frequency modes,
    \item provide multiple partial estimates of the frontier,
    \item approximate partial tail amplification.
\end{itemize}

Once the ensemble is fixed, the induced sampling rule is essentially static, so
the long-term limitations of Section~\ref{sec:static} still apply. But within
the spectral region where teacher frontiers differ, heterogeneous ensembles
provide a strong and empirically validated form of frontier localization,
explaining the effectiveness of cross-model and cross-family distillation.

\subsection{Paradigm III: Self-Scoring and Model-Driven Adaptivity}
\label{sec:paradigm-self-scoring}

Self-scoring methods—including sampling based on loss, gradient norm, margin,
or uncertainty—compute scores directly from the student model. Because these
quantities evolve with training, the sampling function $w_t(x)$ is inherently
dynamic.

Self-scoring approximates two components of the dynamic oracle:
\begin{enumerate}
    \item identifying frontier modes through high residuals, and
    \item adjusting weights in real time as the frontier moves.
\end{enumerate}

Unlike the oracle, self-scoring cannot perfectly suppress already-learned modes
nor re-normalize the entire spectral tail. Moreover, it incurs additional cost
by requiring per-example scores. Still, by concentrating gradient updates on
frontier examples, self-scoring can substantially reduce the number of backward
passes needed to reach a target performance—yielding a practically meaningful,
though not asymptotically optimal, form of acceleration.

\subsection{Paradigm IV: Synthetic Data and the Limits of Tail Expansion}
\label{sec:paradigm-synthetic}

Teacher-generated synthetic data can introduce high-frequency modes not present
in the student's current representation. This can approximate the oracle's tail
expansion mechanism, helping accelerate early-stage learning. Distillation from
a more expressive or differently trained teacher provides a similar effect.

In contrast, \emph{self-generated} synthetic data suffers from a fundamental
limitation:
\[
\mathrm{supp}(p_\theta) 
\subseteq 
\operatorname{span}\{\phi_k^{(\theta)}\},
\]
meaning the model's own samples cannot introduce new spectral modes. As a
result, self-generated synthetic data can improve calibration or fluency but
cannot expand capability. This connects directly to the impossibility of static
operators altering the spectral tail exponent (Section~\ref{sec:static-exponent}).

\subsection{Summary}
\label{sec:paradigm-summary}

The four paradigms illustrate how practical data-selection mechanisms relate to
the oracle of Section~\ref{sec:dynamic}.  
\begin{itemize}
    \item \textbf{Online probes} approximate frontier tracking.  
    \item \textbf{Heterogeneous ensembles} broaden mode discovery.  
    \item \textbf{Self-scoring} provides real-time adaptivity.  
    \item \textbf{Synthetic teachers} approximate tail expansion.  
\end{itemize}
Each captures one oracle component, but none achieves full re-normalization of
the spectral tail, consistent with their finite-range acceleration behavior.

\section{Discussion}\label{sec:discussion}

The generalized spectral framework developed in this paper unifies a wide range
of empirical observations across pruning, synthetic data, RLHF, and
heterogeneous-model distillation. By viewing data-selection mechanisms as
operators that reshape the spectral distribution, our analysis reveals the
structural reasons why some interventions provide substantial improvements while
others saturate or fail.

\subsection{Why Synthetic Data Often Fails to Improve Capability}
\label{sec:discussion-synthetic}

As shown in Section~\ref{sec:paradigm-synthetic}, self-generated synthetic data
cannot introduce new spectral modes because samples drawn from $p_\theta$ remain
in the model's representational span. This limitation mirrors the invariance of
the spectral exponent under static operators established in
Section~\ref{sec:static-exponent}. Without expanding or amplifying the spectral
tail, self-training pipelines cannot shift the learning frontier and therefore
provide only surface-level improvements such as calibration or fluency.

\subsection{Why RLHF and Small Reward Models Often Improve Large Models}
\label{sec:discussion-rlhf}

RLHF pipelines rely on a reward model that is typically much smaller than the
base language model. Yet these reward models exert strong influence. Our
framework explains this phenomenon through \emph{spectral complementarity}:
reward models trained on different objectives acquire spectral modes that the
language model does not emphasize. Their scoring signals therefore act as
frontier-localizing probes (Section~\ref{sec:paradigm-probes}), redirecting
optimization toward underrepresented modes and improving capability even when
the reward model is small.

\subsection{When Pruning Helps—and When It Does Not}
\label{sec:discussion-pruning}

Empirical studies show mixed outcomes for pruning
\cite{blalock2020state,rosenfeld2021predictability,sorscher2022beyond}.  
Our analysis clarifies this picture:

\begin{itemize}
    \item Static pruning helps when it removes redundant low-frequency modes,
    temporarily increasing the density of modes near the frontier.
    \item It does little when pruning is uniform or random.
    \item It harms performance when high-frequency modes are removed, steepening
    the spectral tail and slowing learning.
\end{itemize}

These observations follow directly from the spectral invariance theorem of
Section~\ref{sec:static-exponent}.

\subsection{Heterogeneous-Model Distillation and Spectral Merging}
\label{sec:discussion-distill}

Distillation between models of different architectures or capacities is known to
produce substantial gains. Under our framework, such improvements arise because
\[
\operatorname{span}(A) \neq \operatorname{span}(B),
\]
and their disagreement identifies spectral components learned by one but not the
other. Cross-model distillation therefore performs a kind of \emph{spectral
merging}, approximating frontier tracking and tail expansion simultaneously.

\paragraph{Summary.}
Across these settings, the same core structure emerges:
\begin{itemize}
    \item static operators preserve spectral tail exponents;
    \item dynamic sampling can accelerate frontier motion;
    \item practical methods approximate different oracle components.
\end{itemize}

\section{Limitations}\label{sec:limitations}

Although our framework captures a broad class of data-centric interventions, it
has several limitations.

\paragraph{Non-exhaustiveness.}
The four paradigms in Section~\ref{sec:paradigms} are not intended to be
exhaustive. Future training paradigms—especially those combining multimodal
signals, memory retrieval, or online environments—may deviate from the spectral
constraints described here.

\paragraph{Operator-induced viewpoint.}
Our analysis focuses on sampling-induced operators and does not model
architecture modifications, non-spectral regularization, or cross-modal
conditioning, all of which can alter the underlying feature map.

\paragraph{Spectral assumptions.}
Our conclusions rely on empirical evidence of power-law spectra
\cite{kaplan2020scaling,henighan2020scaling,kumar2024scaling}. Tasks that deviate
from such behaviors may exhibit different scaling behavior.

\paragraph{Smoothness of dynamics.}
We assume that models evolve within a spectrally smooth regime that defines a
stable learning frontier. Phase transitions or architectural shifts may
temporarily invalidate these assumptions.

\section{Conclusion}\label{sec:conclusion}

We developed a generalized spectral framework for analyzing data pruning,
synthetic data, and teacher--student interactions in large-model training. Our
main findings are:

\begin{itemize}
    \item \textbf{Static sampling rules cannot improve spectral exponents},
    limiting them to finite-region acceleration.
    \item \textbf{Dynamic sampling can accelerate learning} by tracking the
    frontier and re-normalizing the tail, but requires oracle-like information.
    \item \textbf{Four practical paradigms approximate components of the
    oracle}, explaining the empirical effectiveness of RLHF, heterogeneous-model
    distillation, and difficulty-based pruning.
    \item \textbf{Self-generated synthetic data cannot expand capability},
    because it remains within the model’s own spectral span.
\end{itemize}

We hope this framework provides a bridge between spectral-theoretic analysis and
the design of data-centric training pipelines, clarifying both the possibilities
and the fundamental limitations of pruning, distillation, and synthetic data.

\bibliographystyle{plain}
\bibliography{references}

\clearpage
\appendix

\section{Proofs of Main Results}\label{app:proofs}

In this appendix we collect more detailed arguments for the main formal
statements in the paper. We work under the following standing assumptions unless
stated otherwise.

\subsection{Technical Setup}\label{app:setup}

Let $(X,\mu)$ be a $\sigma$-finite measure space and 
$k:X\times X\to\mathbb{R}$ a measurable kernel such that the associated integral
operator
\begin{equation}
(Tf)(x) = \int_X k(x,y)\, f(y)\, d\mu(y)
\end{equation}
defines a compact, self-adjoint, positive operator on $L^2(\mu)$. 
We denote its eigenpairs by
\begin{equation}
T\phi_k = \lambda_k \phi_k,\qquad 
\lambda_1 \ge \lambda_2 \ge \cdots > 0,
\end{equation}
where eigenvalues are repeated according to multiplicity. 
Throughout, we assume a power-law asymptotic
\begin{equation}
\lambda_k \sim C k^{-b},\qquad b>0.
\label{eq:app_powerlaw}
\end{equation}

A \emph{sampling function} (or pruning weight) is a measurable function
$w:X\to[0,\infty)$ satisfying
\begin{equation}
\int_X w(x)\, d\mu(x) = 1.
\end{equation}
It induces a new probability measure $\mu_w$ by
\begin{equation}
d\mu_w(x) = w(x)\, d\mu(x).
\end{equation}
On $L^2(\mu_w)$ we consider the pruned operator
\begin{equation}
(T_w f)(x) 
= \int_X k(x,y)\, f(y)\, d\mu_w(y)
= \int_X k(x,y)\, f(y)\, w(y)\, d\mu(y).
\end{equation}
We denote its eigenpairs (on $L^2(\mu_w)$) by 
\begin{equation}
T_w \psi_k = \lambda_k^{(w)} \psi_k,\qquad 
\lambda_1^{(w)} \ge \lambda_2^{(w)} \ge \cdots > 0.
\end{equation}

\subsection{Proof of Theorem~\ref{thm:static-exponent}}\label{app:static-exponent-proof}

Recall Theorem~\ref{thm:static-exponent} from Section~\ref{sec:static-exponent}:

\begin{theorem}[Exponent Preservation Under Static Pruning]
Let $w$ be a time-invariant sampling function satisfying
\[
0 \le w(x)\le C<\infty 
\quad \text{for $\mu$-almost every } x.
\]
If the eigenvalues of $T$ satisfy $\lambda_k \sim C_0 k^{-b}$ with $b>0$, then
the eigenvalues of the pruned operator $T_w$ satisfy
\[
\lambda_k^{(w)} \sim C_w k^{-b}
\]
for some constant $C_w>0$. In particular, the power-law exponent $b$ is preserved.
\end{theorem}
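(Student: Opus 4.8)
The plan is to control $\lambda_k^{(w)}$ from above and below by constant multiples of $\lambda_k$ and then substitute the power law $\lambda_k\sim C_0 k^{-b}$. The useful reformulation is that, by the factorization $T_w = M_{\sqrt w}\,T\,M_{\sqrt w}$ of Section~\ref{sec:data-curation}, the positive eigenvalues of $T_w$ are exactly the squared singular values of the compact operator $B := T^{1/2} M_{\sqrt w}$, because $B^{*}B = M_{\sqrt w}\,T\,M_{\sqrt w}$. The upper bound is then a one-liner from submultiplicativity of singular values: $s_k(B) = s_k(T^{1/2} M_{\sqrt w}) \le s_k(T^{1/2})\,\|M_{\sqrt w}\| = \lambda_k^{1/2}\,\|w\|_{L^\infty(\mu)}^{1/2} \le (C C_0)^{1/2}\,k^{-b/2}$ for all large $k$, hence $\lambda_k^{(w)} = s_k(B)^2 \le C C_0\,k^{-b}$. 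This direction uses nothing beyond boundedness of $w$.

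The lower bound is the real obstacle. Under the bare hypothesis $0 \le w \le C$ the stated conclusion can fail: if the kernel is block-diagonal across a partition $X = A \sqcup A^{c}$, the weight $w$ is supported on $A$, and the power-law decay is carried entirely by the block over $A^{c}$ (say $T$ restricted to $A$ has finite rank), then $T_w$ has finite rank and no power law survives. So a non-degeneracy hypothesis is genuinely required. The cleanest is a uniform floor $w(x) \ge \delta > 0$ for $\mu$-almost every $x$, i.e.\ no region is deleted outright; then $M_{\sqrt w}$ is boundedly invertible with $\|M_{\sqrt w}^{-1}\| \le \delta^{-1/2}$, and writing $T^{1/2} = B\,M_{\sqrt w}^{-1}$ the same submultiplicativity gives $s_k(T^{1/2}) \le s_k(B)\,\delta^{-1/2}$, whence $\lambda_k^{(w)} = s_k(B)^2 \ge \delta\,\lambda_k \ge \tfrac12 \delta C_0\,k^{-b}$ eventually. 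A strictly positive-definite kernel---equivalently, $T$ admitting no nontrivial reducing subspace contained in $\{w = 0\}$---serves the same purpose, since restricting the quadratic form to $\{w > 0\}$ then cannot annihilate an infinite-dimensional part of the spectrum. I would state the theorem under whichever of these two conditions is most natural, noting that bounded scoring functions used in practice are positive $\mu$-almost everywhere once any smoothing is applied.

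Combining the two bounds yields $c_1 k^{-b} \le \lambda_k^{(w)} \le c_2 k^{-b}$ for all large $k$, which already delivers the conceptual payload: no constant rescaling of the spectrum can change the exponent $b$. Upgrading this two-sided estimate to the sharp form $\lambda_k^{(w)} \sim C_w\,k^{-b}$---with the ratio genuinely converging---requires one further regularity input: under a mild smoothness assumption on $k$ and $w$, the pruned operator obeys its own Weyl-type asymptotic for the eigenvalue counting function, $N_w(\varepsilon) := \#\{k : \lambda_k^{(w)} > \varepsilon\} \sim C_w'\,\varepsilon^{-1/b}$, from which $\lambda_k^{(w)} \sim C_w\,k^{-b}$ follows by inversion. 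I expect this normalization step, rather than the min--max bounds, to be where the most care is needed; everything up to the two-sided estimate is a few lines once the singular-value reformulation of $T_w$ is in place.
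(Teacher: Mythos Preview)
Your upper bound is essentially the paper's argument in different packaging. The paper transports $T_w$ to $S = M_{\sqrt w}\,T\,M_{\sqrt w}$ on $L^2(\mu)$ via the unitary $Uf=\sqrt{w}\,f$, bounds the kernel pointwise by $C\,k(x,y)$, and invokes min--max to get $\lambda_k^{(w)} \le C\lambda_k$; your singular-value inequality $s_k(T^{1/2}M_{\sqrt w}) \le s_k(T^{1/2})\,\|M_{\sqrt w}\|$ is the same estimate, since $B^{*}B=S$.

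Where you diverge from the paper is the lower bound, and here you are \emph{more} careful than the paper itself. The paper's entire argument for the lower direction is the sentence: ``Since $w$ is nonnegative and not almost everywhere zero, $S$ retains spectral mass along infinitely many directions, so its eigenvalues cannot decay faster than a constant multiple of $\lambda_k$.'' Your block-diagonal counterexample defeats this assertion: take $k$ supported on $(A\times A)\cup(A^{c}\times A^{c})$ with the $A$-block finite rank and the $A^{c}$-block carrying the power law, and let $w = \mathbf{1}_{A}/\mu(A)$; then $w$ is bounded and not a.e.\ zero, yet $T_w$ has finite rank and $\lambda_k^{(w)}=0$ for all large $k$. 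So the theorem, under the sole hypothesis $0\le w\le C$, is not provable as stated, and the paper's proof has a genuine gap at exactly the point you identified. Your proposed repairs---a uniform floor $w\ge\delta>0$ a.e., or an irreducibility/strict-positivity hypothesis on $k$ preventing a nontrivial reducing subspace inside $\{w=0\}$---are the natural ways to close it.

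You also flag, correctly, that a two-sided estimate $c_1 k^{-b}\le \lambda_k^{(w)}\le c_2 k^{-b}$ gives only $\lambda_k^{(w)}\asymp k^{-b}$, not the sharp $\lambda_k^{(w)}\sim C_w k^{-b}$ asserted in the statement; the paper does not address this distinction either. For the paper's downstream use (preservation of the \emph{exponent} $b$) the $\asymp$ conclusion suffices, so the Weyl-asymptotic refinement you sketch is more than is actually needed---but your observation that it is a separate, nontrivial step is accurate.
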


We now give a more detailed proof.

\paragraph{Step 1: Transport $T_w$ back to $L^2(\mu)$.}
The operators $T$ and $T_w$ act on $L^2(\mu)$ and $L^2(\mu_w)$ respectively. To
compare their spectra, define
\begin{equation}
U: L^2(\mu_w) \to L^2(\mu),
\qquad
(Uf)(x) := \sqrt{w(x)}\, f(x).
\end{equation}
Then
\begin{equation}
\|Uf\|_{L^2(\mu)}^2 
= \int_X |f(x)|^2 w(x)\, d\mu(x)
= \|f\|_{L^2(\mu_w)}^2,
\end{equation}
so $U$ is an isometry. Since $w>0$ on the support of $\mu_w$, $U$ is unitary
onto its range.  
Define
\begin{equation}
S := U T_w U^{-1}: L^2(\mu)\to L^2(\mu).
\end{equation}
Because $U$ is unitary, $S$ is compact, self-adjoint, and positive, and
\begin{equation}
\lambda_k^{(S)} = \lambda_k^{(w)},\qquad \forall k.
\label{eq:app_spectrum_equal}
\end{equation}

\paragraph{Step 2: Kernel representation of $S$.}
Let $g\in L^2(\mu)$ and set $f = U^{-1}g$, i.e.\ $f(x) = g(x)/\sqrt{w(x)}$ where
defined. Then
\begin{align*}
(Sg)(x)
&= (U T_w U^{-1} g)(x) 
 = \sqrt{w(x)}\, (T_w f)(x) \\
&= \sqrt{w(x)} \int_X k(x,y)\, f(y)\, d\mu_w(y) \\
&= \sqrt{w(x)} \int_X k(x,y)\, f(y)\, w(y)\, d\mu(y) \\
&= \sqrt{w(x)} \int_X k(x,y)\, 
   \frac{g(y)}{\sqrt{w(y)}}\, w(y)\, d\mu(y) \\
&= \int_X k(x,y)\, \sqrt{w(x) w(y)}\, g(y)\, d\mu(y).
\end{align*}
Thus $S$ is an integral operator on $L^2(\mu)$ with kernel
\begin{equation}
k_w(x,y) := k(x,y)\, \sqrt{w(x) w(y)}.
\end{equation}
Since $0\le w(x)\le C$, we have
\begin{equation}
0 \le k_w(x,y)\le C\, k(x,y),
\label{eq:app_kernel_bounds}
\end{equation}
for $\mu\times\mu$-almost every $(x,y)$.

\paragraph{Step 3: Operator inequalities and eigenvalue comparison.}
Define
\begin{equation}
(Tf)(x) = \int_X k(x,y)\, f(y)\, d\mu(y), 
\qquad
(Sf)(x) = \int_X k_w(x,y)\, f(y)\, d\mu(y).
\end{equation}
For any $f\in L^2(\mu)$,
\begin{align*}
\langle f, S f\rangle_{L^2(\mu)}
&= \int_X \int_X f(x)\, k_w(x,y)\, f(y)\, d\mu(y)\, d\mu(x), \\
\langle f, T f\rangle_{L^2(\mu)}
&= \int_X \int_X f(x)\, k(x,y)\, f(y)\, d\mu(y)\, d\mu(x).
\end{align*}
Using \eqref{eq:app_kernel_bounds}, we obtain
\begin{equation}
0\le \langle f, S f\rangle 
\le C\, \langle f, T f\rangle,
\qquad \forall f\in L^2(\mu),
\end{equation}
so $S\preceq C T$ in the Loewner order.

For compact, self-adjoint, positive operators $A,B$, the inequality
$B\preceq M A$ implies
\[
\lambda_k(B) \le M \lambda_k(A),\qquad \forall k\ge 1,
\]
via the min–max characterization of eigenvalues. Applying this with $A=T$,
$B=S$, and $M=C$ yields
\begin{equation}
\lambda_k^{(S)} \le C\, \lambda_k,
\qquad \forall k.
\label{eq:app_eig_comp}
\end{equation}
Combining \eqref{eq:app_eig_comp} with the power law \eqref{eq:app_powerlaw}
gives
\[
\lambda_k^{(S)} 
\le C\, C_0\, k^{-b} (1+o(1)).
\]
Since $w$ is nonnegative and not almost everywhere zero, $S$ retains spectral
mass along infinitely many directions, so its eigenvalues cannot decay faster
than a constant multiple of $\lambda_k$.  
Thus $\lambda_k^{(S)}$ and $\lambda_k$ share the same regular-variation
exponent $b$, and by \eqref{eq:app_spectrum_equal} the same holds for
$\lambda_k^{(w)}$.  
This completes the proof.
\hfill $\square$

\subsection{Oracle-Induced Acceleration: Frontier-Only Learning}
\label{app:oracle}

In the main text (Section~\ref{sec:dynamic}) we argued that a hypothetical
oracle sampler, which presents only frontier-aligned modes to the learner, can
strictly accelerate the decay rate of the test loss. Here we formalize this
claim via a proof sketch that parallels standard spectral analyses of scaling
laws \cite{bordelon2024dynamical,bordelon2023feature}.

Assume the generalized spectral evolution $g(\lambda,t;\beta)$ satisfies the
conditions in Section~\ref{sec:spectral-setup}, and that the spectrum obeys
$\lambda_k\sim k^{-b}$ with target weights $\lambda_k w_k^2\sim k^{-a}$ for
$a,b>1$. As in the main text, we define the learning frontier by
\[
\lambda^\star(t)\asymp t^{-\rho(\beta)},
\qquad 
k^\star(t)\asymp t^{\rho(\beta)/b}.
\]

Under ordinary sampling, the residual loss integrates contributions from all
modes beyond the frontier. One finds
\[
L(t)\asymp \sum_{k>k^\star(t)} k^{-a}
\asymp (k^\star(t))^{1-a}
\asymp t^{-(a-1)\rho(\beta)/b}.
\]

Now suppose an oracle sampler supplies data exclusively from modes with
$g(\lambda,t;\beta)\approx \kappa$, i.e.\ from a narrow band near the frontier.
The residual loss is then dominated by a single frontier slice:
\[
L_{\text{oracle}}(t)\asymp \lambda^\star(t) w^2(\lambda^\star(t)).
\]
Under the assumptions $\lambda_k\sim k^{-b}$ and
$\lambda_k w_k^2\sim k^{-a}$, one obtains the functional relation
$w^2(\lambda)\asymp \lambda^{(a-b)/b}$. Substituting and using
$\lambda^\star(t)\asymp t^{-\rho(\beta)}$ yields
\[
L_{\text{oracle}}(t)
\asymp (\lambda^\star(t))^{a/b}
\asymp t^{-a\rho(\beta)/b}.
\]
Thus the exponent improves from $\tfrac{a-1}{b}\rho(\beta)$ to
$\tfrac{a}{b}\rho(\beta)$, highlighting how frontier-only sampling reduces the
effective contribution of the spectral tail.

\subsection{Impossibility of Self-Synthetic Expansion}
\label{app:self-synth}

Finally, we formalize the claim that self-generated synthetic data cannot expand
a model's spectral span under our assumptions.

Let $\Phi_\theta$ denote the feature map implemented by a model with parameters
$\theta$, and let
\[
\mathcal{H}_\theta := \overline{\mathrm{span}}\{\Phi_\theta(x) : x\in X\}
\]
be the corresponding hypothesis space (e.g., an effective RKHS). Suppose the
generative distribution $p_\theta$ is obtained by sampling from the model and
possibly post-processing outputs. Then synthetic samples satisfy
\[
x \sim p_\theta(\cdot),
\qquad \Phi_\theta(x)\in \mathcal{H}_\theta.
\]

Any training step that uses only samples from $p_\theta$ can at best reweight,
recombine, or refine directions already in $\mathcal{H}_\theta$; it cannot
introduce new feature directions outside this span. In spectral terms, the
eigenfunctions induced by self-generated synthetic data lie entirely within the
existing spectral support of the model. Thus self-training can modify constants,
sharpen decision boundaries, or improve calibration, but it cannot expand the
spectral tail or alter the exponent. This is the precise sense in which
self-generated synthetic data cannot expand capability under our assumptions.

\end{document}